\documentclass[letterpaper, 10 pt, conference]{ieeeconf}  

\IEEEoverridecommandlockouts                              

\usepackage{graphics} 
\usepackage[caption=false,font=footnotesize]{subfig} 
\usepackage{amsmath} 
\usepackage{amssymb}  
\usepackage{mathrsfs}
\usepackage{mathtools}

\usepackage{cite}
\usepackage{algorithm}

\makeatletter
\newcommand{\fs@paddedruled}{%
  \fs@ruled
  \def\@fs@pre{\kern5pt\hrule height.8pt depth0pt\kern2pt}%
}
\makeatother
\floatstyle{paddedruled}
\restylefloat{algorithm}

\usepackage{algpseudocode}
\usepackage{color}

\title{\LARGE \bf
Meta-Multi-Agent Reinforcement Learning for Fast Adaptation of Interactive Policies with Applications to Autonomous Driving
}

\author{Huiwen Yan$^{1}$, Kyriakos G. Vamvoudakis$^{2}$, and Mushuang Liu$^{1}$
\thanks{$^{1}$H. Yan and M. Liu are with the Department of Mechanical Engineering, Virginia Tech, Blacksburg, VA 24061, USA
{\tt\small huiweny@vt.edu, mushuang@vt.edu}}
\thanks{$^{2}$K. G. Vamvoudakis is with the Daniel Guggenheim School of Aerospace Engineering, Georgia Tech, Atlanta, GA 30332, USA
{\tt\small kyriakos@gatech.edu}
        }\thanks{This work was supported in part by the DARPA Young Faculty Award with the grant number D24AP00321, by NSF under grant
No.~SLES-2415479, by NASA ULI under grant No.~80NSSC25M7104, and by ARO under
grant No.~W911NF-24-1-0174.}}

\begin{document}

\newtheorem{thm}{Theorem}
\newtheorem{remark}{Remark}
\newtheorem{lemma}{Lemma}
\newtheorem{prop}{Proposition}
\newtheorem{defn}{Definition}
\newtheorem{condi}{Condition}
\newtheorem{assump}{Assumption}

\maketitle
\thispagestyle{empty}
\pagestyle{empty}

\begin{abstract}
This paper develops a meta-multi-agent reinforcement learning (meta-MARL) framework to enable fast adaptation of interactive policies in a multi-agent system (MAS). Meta-reinforcement learning (meta-RL) enables agents to rapidly adapt to new tasks/environments using a bi-level optimization mechanism. However, existing meta-RL generally focuses on single-agent systems. Extending these frameworks and algorithms to multi-agent systems poses additional challenges, as tasks are characterized by not only the environment but also agents’ strategic interactions. To address these challenges, we model multi-agent reinforcement learning (MARL) problems as Markov games (MGs) and develop a meta-MARL framework for rapid interactive policy adaptation across a distribution of MGs. A new concept, called meta-NE, is defined to describe the desired solution concept in a meta-MARL problem. Sufficient conditions for the equivalence between a meta-NE and a stationary point of the gradient-play-based meta-MARL algorithm are established. Our evaluation on autonomous-driving tasks demonstrates that the proposed meta-MARL method achieves faster adaptation than pretrained MARL baselines, validating the effectiveness of our framework.
\end{abstract}

\section{INTRODUCTION}
Reinforcement learning (RL) has been widely applied to complex decision-making problems in diverse fields such as robotics \cite{DroneRacing}, autonomous driving \cite{InterpretableEnd2End, EfficientRLforAD, Think2Drive}, communication \cite{RL_communication}, and smart manufacturing \cite{RL_JobShopSchedule}. RL problems are commonly formulated as Markov decision processes (MDPs), in which an agent interacts with the environment to iteratively optimize its policy \cite{putermanMarkov}. However, when the environment changes, the previously learned policies may not be applicable, and retraining from scratch often requires substantial additional data and computation. It is therefore desirable to develop RL agents that can rapidly adapt to new tasks or environments.

To enable rapid adaptation to new tasks, meta-RL has received increasing attention \cite{MAML, PEARL, Bayes-Adaptive}. Meta-RL learns across a distribution of tasks so that an agent can adapt to a new task more quickly or with less additional data, whereas standard RL optimizes a policy for a single task. Meta-RL is inherently a bilevel optimization problem: the inner level obtains task-specific policies or information, and the outer level maximizes the post-adaptation return. Thus, meta-RL optimizes the learning procedure itself to enable rapid adaptation. Existing meta-RL methods include gradient-based approaches such as MAML \cite{MAML} and context-based approaches such as PEARL \cite{PEARL}. MAML learns a policy-parameter initialization that can adapt to a new task through a small number of gradient updates, and it optimizes this initialization based on the performance of the adapted policies. In contrast, PEARL represents each task using a latent variable inferred from collected data and conditions the policy on this variable. Adaptation therefore updates the posterior distribution of the latent variable rather than the policy parameters. Most existing meta-RL frameworks, however, focus on single-agent settings. In a multi-agent system (MAS), a task is defined not only by the environment but also by strategic interactions among agents, creating additional challenges for rapid policy adaptation.

To characterize strategic interactions in RL problems, multi-agent reinforcement learning (MARL) has recently been investigated \cite{marl-book, gradientPlay}. In MARL, an agent's optimal policy depends not only on the environment but also on the policies of the other agents, which leads naturally to a Markov game (MG) formulation \cite{marl-book}. A standard solution concept in an MG is a Nash equilibrium (NE), at which no agent can improve its return by unilaterally changing its policy. However, finding an NE can be challenging: a deterministic NE may not always exist in a general MG, and learning algorithms, such as gradient play, may not always converge \cite{gradientPlay}. These difficulties motivate the study of Markov potential games (MPGs) \cite{gradientPlay, MAPGInMPG}. MPGs are a special class of MGs in which agents' unilateral incentives are represented by a scalar potential function. Potential games possess favorable equilibrium and learning properties, including the existence of deterministic NEs and the convergence of best-response and better-response dynamics \cite{PotentialGames, MLiu_PotentialGame, MLiu_PCPG}. Under the conditions established in \cite{gradientPlay}, MPGs likewise admit the existence of deterministic NEs and convergence of gradient play to approximate NEs.

To enable rapid adaptation of interactive policies in complex MASs, we develop a meta-MARL framework. As in meta-RL, the inner level obtains task-specific policies, while the outer level optimizes post-adaptation performance. In an MAS, however, a task must capture both the environment and the agents' strategic interactions. Therefore, we formulate each inner-loop optimization problem as an MG problem, and formulate the outer loop such that the NE to every given MG can be quickly derived. Along this direction, a few recent studies have considered possible integrations of meta-RL and MARL \cite{PGA4MARL, MA-Meta-RL}. However, despite recent advances, formal equilibrium concepts for meta-MARL, their relationship to the converged points of a learning algorithm, and their effectiveness in autonomous systems such as autonomous driving, still remain open. The contributions of this paper are fourfold:
\begin{enumerate}
    \item We define the meta-Nash equilibrium (meta-NE) as a solution concept for meta-MARL. At a meta-NE, no agent can improve its expected post-adaptation return by unilaterally changing its own initialization policy.
    \item We establish sufficient conditions under which meta-NEs are equivalent to first-order stationary policies of the induced meta-game.
    \item We develop a MAML-style meta-MARL method that exploits potential functions in MPGs to form a meta-training objective.
    \item We evaluate the proposed method in autonomous highway forced-merging scenarios and perform comprehensive comparisons with pretrained MARL baselines.
\end{enumerate}

The remainder of this paper is organized as follows. Section~\ref{Section2} formulates MGs, MPGs, and MARL. Section~\ref{Section3} defines meta-NE, establishes sufficient conditions for its equivalence to stationary policies, and presents a MAML-style meta-MARL algorithm. Section~\ref{Section4} evaluates the framework in autonomous driving. Section~\ref{Section5} concludes the paper.
\section{Problem Formulation and Preliminaries}\label{Section2}
We formulate Markov games and multi-agent reinforcement learning in Sections~\ref{subsec:MG} and~\ref{subsec:MARL}, respectively.
\subsection{Markov Games}\label{subsec:MG}
We define an MG by the tuple $\mathcal{M}=(\mathcal{N},\mathcal{S},\mathcal{A},P,r,\gamma,\rho)$. The agent set is $\mathcal{N}=\{1,\ldots,N\}$; the global state space is $\mathcal{S}=\mathcal{S}_1\times\cdots\times\mathcal{S}_N$; and the joint action space is $\mathcal{A}=\mathcal{A}_1\times\cdots\times\mathcal{A}_N$, where $\mathcal{S}_i$ and $\mathcal{A}_i$ denote the state and action spaces of agent $i$. The transition kernel $P:\mathcal{S}\times\mathcal{A}\rightarrow\Delta(\mathcal{S})$ gives the probability $P(s'\mid s,a)$ of transitioning from $s=(s_1,\ldots,s_N)$ to $s'$ under the joint action $a=(a_1,\ldots,a_N)$. The tuple $r=(r_1,\ldots,r_N)$ collects the reward functions $r_i:\mathcal{S}\times\mathcal{A}\rightarrow\mathbb{R}$, $\gamma\in[0,1)$ is the discount factor, and $\rho\in\Delta(\mathcal{S})$ is the initial-state distribution.

Actions are selected by the joint policy $\pi:\mathcal{S}\rightarrow\Delta(\mathcal{A})$. We assume that $\pi$ factorizes as $\pi=\pi_1\times\cdots\times\pi_N$, where $\pi_i:\mathcal{S}\rightarrow\Delta(\mathcal{A}_i)$ is agent $i$'s policy. Thus, each agent selects its own action from the observed global state. At time $t$, given $s_t=(s_{1,t},\ldots,s_{N,t})$, the probability of the joint action $a_t=(a_{1,t},\ldots,a_{N,t})$ factorizes as
\begin{equation}\label{eq:generalPolicy}
\pi(a_t|s_t)=\prod_{i=1}^{N}\pi_i(a_{i,t}|s_t).
\end{equation}
We parameterize $\pi$ and $\pi_i$ by $\theta=(\theta_1,\ldots,\theta_N)\in\mathcal{X}$ and $\theta_i\in\mathcal{X}_i$, respectively, and denote the resulting policies by $\pi_{\theta}$ and $\pi_{i,\theta_i}$, where $\mathcal{X}=\mathcal{X}_1\times\cdots\times\mathcal{X}_N$. When the context is clear, we use $\theta$ and $\pi_{\theta}$, and likewise $\theta_i$ and $\pi_{i,\theta_i}$, interchangeably.

For agent $i$, the value function $V_i^{\theta}:\mathcal{S}\rightarrow\mathbb{R}$ is the expected discounted sum of future rewards from initial state $s$:
\begin{equation}
V_i^{\theta}(s)\coloneqq\mathbb{E}\left[\sum_{t=0}^{\infty}\gamma^t r_i(s_t,a_t)\middle|\pi_{\theta}, s_0=s\right].
\end{equation}
The expected return of agent $i$, $J_i:\mathcal{X}\rightarrow\mathbb{R}$, averages its value function over $s_0\sim\rho$:
\begin{equation}\label{eq:totalReward}
J_i(\theta)\coloneqq\mathbb{E}_{s_0\sim\rho}V_i^{\theta}(s_0).
\end{equation}

In an MG, a Nash equilibrium is generally considered a desirable solution concept. Its definition is given below.
\begin{defn}\label{df:nashEquilibrium}(Nash equilibrium \cite{fudenberg1991game})
    A policy $\theta^\star=(\theta_1^\star,\ldots,\theta_N^\star)$ is called a Nash equilibrium if
    \begin{equation}
    J_i(\theta_i^\star,\theta_{-i}^\star)\geq J_i(\tilde{\theta}_i,\theta_{-i}^\star),\quad\forall \tilde{\theta}_i \in \mathcal{X}_i,\quad i \in \mathcal{N}.
    \end{equation}
\end{defn}

\subsection{Multi-Agent Reinforcement Learning}\label{subsec:MARL}
Given the conditions established in \cite{gradientPlay}, we can consider the following gradient play to solve an MG, under which each agent updates its policy according to
\begin{equation}\label{eq:gradientPlay}
    \theta_i^{(k+1)}=\operatorname{proj}_{\mathcal{X}_i}\!\left(\theta_i^{(k)}+\eta\nabla_{\theta_i}J_i(\theta^{(k)})\right),
\end{equation}
where $k$ indexes the learning iteration and $\eta>0$ denotes the learning step size.

Gradient play \eqref{eq:gradientPlay} may not always converge in a general-sum MG. We therefore consider a special subclass, MPGs.

\begin{defn}\label{df:markovPotentialGame}
    (Markov potential game \cite{gradientPlay}). An MG $\mathcal{M}$ is an MPG if there exists a potential function $\phi:\mathcal{S}\times\mathcal{A}\rightarrow\mathbb{R}$ such that, for every $i\in\mathcal{N}$, $(\tilde{\theta}_i,\theta_{-i}),(\theta_i,\theta_{-i})\in\mathcal{X}$, and $s\in\mathcal{S}$, we have
    \begin{equation}\label{eq:markovPotentialGame}
\begin{split}
&\mathbb{E}\Biggl[\sum_{t=0}^{\infty}\gamma^{t}r_i(s_t,a_t)\,\Biggm|\,\pi_{(\tilde{\theta}_i,\theta_{-i})},\,s_0=s\Biggr] \\
&\quad - \mathbb{E}\Biggl[\sum_{t=0}^{\infty}\gamma^{t}r_i(s_t,a_t)\,\Biggm|\,\pi_{(\theta_i,\theta_{-i})},\,s_0=s\Biggr] \\
&= \mathbb{E}\Biggl[\sum_{t=0}^{\infty}\gamma^{t}\phi(s_t,a_t)\,\Biggm|\,\pi_{(\tilde{\theta}_i,\theta_{-i})},\,s_0=s\Biggr] \\
&\quad - \mathbb{E}\Biggl[\sum_{t=0}^{\infty}\gamma^{t}\phi(s_t,a_t)\,\Biggm|\,\pi_{(\theta_i,\theta_{-i})},\,s_0=s\Biggr].
\end{split}
\end{equation}
\end{defn}
The total potential function of an MPG is
\begin{equation}\label{eq:totalPotentialFunction}
\Phi(\theta)\coloneqq\mathbb{E}_{s_0\sim\rho}\left[\sum_{t=0}^{\infty}\gamma^t\phi(s_t,a_t)\Biggm|\pi_{\theta},s_0\right].
\end{equation}

\begin{lemma}
    (Convergence for gradient play \cite{gradientPlay}). Under the conditions established in \cite{gradientPlay}, gradient play \eqref{eq:gradientPlay} reaches an $\varepsilon$-NE of an MPG for any $\varepsilon>0$.
\end{lemma}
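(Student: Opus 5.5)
The plan follows the standard potential-game argument of \cite{gradientPlay}. We take the conditions there to be: finite state and action spaces, a direct (tabular) parameterization in which $\mathcal{X}_i$ is a product of simplices, a positive lower bound on the initial distribution $\rho$ (so the distribution-mismatch coefficient $D=\max_{\theta}\|d_\rho^{\theta}/\rho\|_\infty$ is finite), and a step size $\eta$ small relative to the smoothness constant of $\Phi$.

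\textbf{Step 1: reduce to projected gradient ascent on $\Phi$.} Definition~\ref{df:markovPotentialGame} says that $J_i(\cdot,\theta_{-i})-\Phi(\cdot,\theta_{-i})$ does not depend on $\theta_i$ once we average over $s_0\sim\rho$. Differentiating in $\theta_i$ therefore gives $\nabla_{\theta_i}J_i(\theta)=\nabla_{\theta_i}\Phi(\theta)$. Because $\mathcal{X}=\prod_i\mathcal{X}_i$, the projection onto $\mathcal{X}$ splits blockwise, so the simultaneous update \eqref{eq:gradientPlay} is exactly
\begin{equation*}
\theta^{(k+1)}=\operatorname{proj}_{\mathcal{X}}\!\left(\theta^{(k)}+\eta\nabla\Phi(\theta^{(k)})\right),
\end{equation*}
which is projected gradient ascent on the single function $\Phi$.

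\textbf{Step 2: smoothness and sufficient ascent.} Next we show that $\Phi$ is $L$-smooth on $\mathcal{X}$, with $L$ polynomial in $N$, $\sum_i|\mathcal{A}_i|$, $\|\phi\|_\infty$ and $(1-\gamma)^{-1}$. This follows the usual second-order bound on the value function under direct parameterization. The projected-gradient descent lemma with $\eta\le 1/L$ then gives
\begin{equation*}
\Phi(\theta^{(k+1)})-\Phi(\theta^{(k)})\ge\frac{1}{2\eta}\|\theta^{(k+1)}-\theta^{(k)}\|^2.
\end{equation*}
The potential $\Phi$ is bounded by $\|\phi\|_\infty/(1-\gamma)$, so summing over $k$ shows that $\min_{k<K}\|\theta^{(k+1)}-\theta^{(k)}\|=O(K^{-1/2})$. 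Hence some iterate is an approximate first-order stationary point, in the sense that the gradient mapping is small.

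\textbf{Step 3: from stationarity to an $\varepsilon$-NE.} This is the main obstacle, because $\Phi$ is nonconvex. We handle it with a gradient-domination inequality for each agent's own objective with the others fixed. Fix $\theta_{-i}$. Agent $i$ then faces an MDP, and by the performance-difference lemma
\begin{equation*}
\max_{\tilde\theta_i}J_i(\tilde\theta_i,\theta_{-i})-J_i(\theta)\le\frac{D}{1-\gamma}\max_{\tilde\theta_i\in\mathcal{X}_i}(\tilde\theta_i-\theta_i)^\top\nabla_{\theta_i}J_i(\theta).
\end{equation*}
Near a projected-gradient fixed point, the right-hand side is controlled by $\|\theta^{(k+1)}-\theta^{(k)}\|/\eta$ plus an $L$-dependent correction, using the variational characterization of the projection.

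Combining this with Step 2, at the chosen iterate every agent's unilateral gain is at most $O\!\left(\frac{D}{1-\gamma}\sqrt{\frac{\Phi_{\max}-\Phi_{\min}}{\eta K}}\right)$. Choosing $K$ large enough drives this below any $\varepsilon>0$, which is the claimed $\varepsilon$-NE. The delicate points are obtaining the inequality of Step 3 with a finite constant, which requires $\rho>0$, and keeping the constants in the smoothness bound under control.
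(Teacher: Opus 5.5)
Your proposal is correct and takes essentially the same route as the paper. The paper gives no proof of its own and simply defers to \cite{gradientPlay}, whose argument is your reduction of gradient play to projected gradient ascent on $\Phi$, the smoothness-based sufficient-ascent bound, and per-agent gradient domination with $\rho>0$ turning a small gradient mapping into a small NE-gap; one harmless precision for Step~1 is that under the direct parameterization $\nabla_{\theta_i}J_i=\nabla_{\theta_i}\Phi$ holds only along directions tangent to the simplices (the Euclidean gradients can differ by per-state multiples of the all-ones vector, e.g.\ when $r_i=\phi+1$), which changes nothing because projection onto a simplex is invariant under such shifts and $\tilde\theta_i-\theta_i$ sums to zero at every state.
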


Here, an $\varepsilon$-NE is a policy profile at which no agent can improve its expected return by more than $\varepsilon$ through a unilateral deviation.

Given the expected return in \eqref{eq:totalReward} and the total potential in \eqref{eq:totalPotentialFunction}, we can rewrite \eqref{eq:markovPotentialGame} as
\begin{equation}\label{eq:MPGRewrite}
    J_i(\tilde{\theta}_i,\theta_{-i})-J_i(\theta_i,\theta_{-i})=\Phi(\tilde{\theta}_i,\theta_{-i})-\Phi(\theta_i,\theta_{-i}),
\end{equation}
Assuming differentiability, this identity gives
\begin{equation}
    \nabla_{\theta_i}J_i(\theta)=\nabla_{\theta_i}\Phi(\theta).
\end{equation}
Hence, \eqref{eq:gradientPlay} is equivalent to the following:
\begin{equation}
    \theta_i^{(k+1)}=\operatorname{proj}_{\mathcal{X}_i}\!\left(\theta_i^{(k)}+\eta\nabla_{\theta_i}\Phi(\theta^{(k)})\right),
\end{equation}
or, jointly,
\begin{equation}
    \theta^{(k+1)}=\operatorname{proj}_{\mathcal{X}}\!\left(\theta^{(k)}+\eta\nabla_{\theta}\Phi(\theta^{(k)})\right).
\end{equation}
\section{Meta-Multi-Agent Reinforcement Learning}\label{Section3}
We first consider meta-MARL under general MGs in Section~\ref{subsec:Meta-MARL_MG}, define the meta-NE, and study its relationship with first-order stationary policies. We then focus on MPGs in Section~\ref{subsec:Meta-MARL_MPG} and develop an MPG-based MAML-style meta-MARL method.
\subsection{Meta-MARL with General MGs}\label{subsec:Meta-MARL_MG}
The goal of meta-MARL is to enable rapid policy adaptation to any given task in MASs, where a task is represented by an MG characterizing both the environment and the strategic interactions of agents. Specifically, we consider a distribution of MGs, denoted by $p(\mathcal{M})$, from which a task $\mathcal{M}_m$ is sampled. We use $m$ to denote task $\mathcal{M}_m$ when no ambiguity arises.

We first introduce the meta-NE solution concept. Define agent $i$'s expected post-adaptation return over $p(\mathcal{M})$, $\Psi_i\colon\mathcal{X}\rightarrow\mathbb{R}$, by
\begin{equation}
    \Psi_i(\theta)\coloneqq\mathbb{E}_{\mathcal{M}_m\sim p(\mathcal{M})}J_{i,m}(\mathbb{U}_m(\theta)),
\end{equation}
where $\mathbb{U}_m\colon\mathcal{X}\rightarrow\mathcal{X}$ is the joint adaptation rule for task $m$. We consider decentralized adaptation $\mathbb{U}_m=\mathbb{U}_{1,m}\times\cdots\times\mathbb{U}_{N,m}$. Further, we define $\mathbb{U}_{i,m}\colon\mathcal{X}\rightarrow\mathcal{X}_i$ as the adaptation rule, with agent $i$'s adaptation depending on the joint policy $\pi$, which we refer to as a coupled adaptation. This formulation induces the meta-game $\mathcal{G}=(\mathcal{N},\{\mathcal{X}_i\}_{i=1}^{N},\{\Psi_i\}_{i=1}^{N})$ over policy initializations. A meta-NE is a Nash equilibrium of this induced game, at which no agent can improve its expected post-adaptation return by changing its initialization while the other agents' initialization policies remain fixed.
\begin{defn}\label{df:meta-NE} (Meta-Nash equilibrium). A meta-policy initialization $\theta^{\star}=(\theta_1^{\star},\ldots,\theta_N^{\star})$ is a meta-Nash equilibrium with respect to $p(\mathcal{M})$ and $\mathbb{U}_m$ if
\begin{equation}\label{eq:MPG-meta-NE}
    \Psi_i(\theta_i^{\star},\theta_{-i}^{\star})\geq\Psi_i(\tilde{\theta}_i,\theta_{-i}^{\star}),
    \quad \forall \tilde{\theta}_i\in\mathcal{X}_i,\quad i\in\mathcal{N}.
\end{equation}
\end{defn}
\begin{remark}
    If $p(\mathcal{M})$ is concentrated on a single MG and $\mathbb{U}_m(\theta)=\theta$, then the meta-NE condition \eqref{eq:MPG-meta-NE} becomes
    \begin{equation}
    J_i(\theta_i^\star,\theta_{-i}^\star)\geq J_i(\tilde{\theta}_i,\theta_{-i}^\star),
    \end{equation}
    which is the NE condition in Definition~\ref{df:nashEquilibrium}.
\end{remark}

Next, we give sufficient conditions under which meta-NEs are equivalent to first-order stationary policies of the meta-game $\mathcal{G}$.
\begin{defn}
    (First-order stationary policy in meta-MARL). A policy $\theta^{\star}=(\theta_1^\star,\ldots,\theta_N^\star)$ is first-order stationary for the induced meta-game if
    \begin{equation}
    (\tilde{\theta}_i-\theta_i^{\star})^{\top}\nabla_{\theta_i}\Psi_i(\theta^{\star})\leq0,
    \quad \forall \tilde{\theta}_i\in\mathcal{X}_i,\quad i\in\mathcal{N}.
    \end{equation}
\end{defn}
\begin{assump}\label{assump:regularity}
    (Regularity). For each $i\in\mathcal{N}$, $\mathcal{X}_i$ is nonempty, compact, and convex. Moreover, $\Psi_i$ is continuously differentiable on an open set containing $\mathcal{X}$.
\end{assump}
\begin{assump}\label{assump:MGD}
    (Meta-gradient domination). For each $i\in\mathcal{N}$, there exists a finite constant $C_i>0$ such that, for every $\theta\in\mathcal{X}$ and $\tilde{\theta}_i\in\mathcal{X}_i$,
    \begin{small}
    \begin{equation}
        \Psi_i(\tilde{\theta}_i,\theta_{-i})-\Psi_i(\theta)\leq C_i\max_{\bar{\theta}_i\in\mathcal{X}_i}(\bar{\theta}_i-\theta_i)^{\top}\nabla_{\theta_i}\Psi_i(\theta).
    \end{equation}
    \end{small}
\end{assump}
Assumption~\ref{assump:MGD} is analogous to the gradient-domination property established in \cite{gradientPlay}.
\begin{thm}
    Under Assumptions~\ref{assump:regularity} and~\ref{assump:MGD}, a policy is first-order stationary for the induced meta-game if and only if it is a meta-NE.
\end{thm}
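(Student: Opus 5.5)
The plan is to prove the two implications separately, one per direction.

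\emph{Meta-NE $\Rightarrow$ stationary.} Let $\theta^\star$ be a meta-NE and fix $i$ and $\tilde\theta_i\in\mathcal{X}_i$. By convexity of $\mathcal{X}_i$ (Assumption~\ref{assump:regularity}), the point $\theta_i^\star+t(\tilde\theta_i-\theta_i^\star)$ lies in $\mathcal{X}_i$ for every $t\in(0,1]$. The meta-NE inequality \eqref{eq:MPG-meta-NE} then gives
\[
\Psi_i(\theta_i^\star+t(\tilde\theta_i-\theta_i^\star),\theta_{-i}^\star)-\Psi_i(\theta^\star)\le 0 .
\]
Dividing by $t$ and letting $t\downarrow 0$ yields the directional derivative. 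This limit exists and equals $(\tilde\theta_i-\theta_i^\star)^\top\nabla_{\theta_i}\Psi_i(\theta^\star)$ because $\Psi_i$ is continuously differentiable on an open set containing $\mathcal{X}$. We conclude that this quantity is $\le 0$, which is the first-order stationarity condition for agent $i$. This direction uses only Assumption~\ref{assump:regularity}.

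\emph{Stationary $\Rightarrow$ meta-NE.} Let $\theta^\star$ be first-order stationary. For each $i$, stationarity says $(\bar\theta_i-\theta_i^\star)^\top\nabla_{\theta_i}\Psi_i(\theta^\star)\le0$ for all $\bar\theta_i\in\mathcal{X}_i$. Two facts about the maximum of this quantity over $\bar\theta_i\in\mathcal{X}_i$ are needed:
\begin{itemize}
\item The maximum is attained, since $\mathcal{X}_i$ is compact and the objective is linear.
\item The maximum is $\ge 0$, by choosing $\bar\theta_i=\theta_i^\star$.
\end{itemize}
Combined with stationarity, the maximum equals exactly $0$. Applying Assumption~\ref{assump:MGD} at $\theta=\theta^\star$ with any $\tilde\theta_i$ gives
\[
\Psi_i(\tilde\theta_i,\theta_{-i}^\star)-\Psi_i(\theta^\star)\le C_i\cdot 0=0 .
\]
This is \eqref{eq:MPG-meta-NE}, so $\theta^\star$ is a meta-NE.

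\emph{Main obstacle.} Neither step is computationally heavy. The only point needing care is the directional-derivative limit in the first direction. Differentiability must hold at boundary points of $\mathcal{X}$, and this is why Assumption~\ref{assump:regularity} requires $C^1$ regularity on an open neighborhood of $\mathcal{X}$; the limit is justified by the mean value theorem or the chain rule along the segment. I would also remark briefly that the max in Assumption~\ref{assump:MGD} is well defined, which again follows from compactness. The theorem itself needs nothing further.
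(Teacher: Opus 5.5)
Your proposal is correct and takes essentially the same approach as the paper's proof. The forward direction uses the convex segment and a one-sided directional derivative, and the converse notes that the maximum in Assumption~\ref{assump:MGD} equals $0$ at a stationary point (take $\bar\theta_i=\theta_i^\star$) before applying meta-gradient domination; your remarks on attainment of the maximum and on differentiability at boundary points are sound details the paper leaves implicit.
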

\begin{proof}
    Suppose first that $\theta^\star$ is a meta-NE. Fix any $i\in\mathcal{N}$ and $\tilde{\theta}_i\in\mathcal{X}_i$. Since $\mathcal{X}_i$ is convex, $(1-\tau)\theta_i^\star+\tau\tilde{\theta}_i\in\mathcal{X}_i, \forall \tau\in(0,1]$. By the meta-NE condition,
    \begin{equation}
        \Psi_i\bigl((1-\tau)\theta_i^\star+\tau\tilde{\theta}_i,\theta_{-i}^\star\bigr)
        -\Psi_i(\theta^\star)\leq 0.
    \end{equation}
    Dividing by $\tau>0$ and letting $\tau\downarrow0$, differentiability of $\Psi_i$ gives
    \begin{equation}
        (\tilde{\theta}_i-\theta_i^\star)^\top
        \nabla_{\theta_i}\Psi_i(\theta^\star)\leq0,
    \end{equation}
    satisfying first-order stationarity.

    Conversely, suppose that $\theta^\star$ is first-order stationary. Then, for every $i\in\mathcal{N}$,
    \begin{equation}
        (\bar{\theta}_i-\theta_i^\star)^\top
        \nabla_{\theta_i}\Psi_i(\theta^\star)\leq0,
        \qquad \forall \bar{\theta}_i\in\mathcal{X}_i.
    \end{equation}
    Since $\bar{\theta}_i=\theta_i^\star$ is feasible, it follows that $\max_{\bar{\theta}_i\in\mathcal{X}_i}
        (\bar{\theta}_i-\theta_i^\star)^\top
        \nabla_{\theta_i}\Psi_i(\theta^\star)=0$.
    Hence, by Assumption~\ref{assump:MGD}, for any
    $\tilde{\theta}_i\in\mathcal{X}_i$,
    \begin{small}
    \begin{equation}
        \Psi_i(\tilde{\theta}_i,\theta_{-i}^\star)-\Psi_i(\theta^\star)
        \leq
        C_i
        \max_{\bar{\theta}_i\in\mathcal{X}_i}
        (\bar{\theta}_i-\theta_i^\star)^\top
        \nabla_{\theta_i}\Psi_i(\theta^\star)
        =0.
    \end{equation}
    \end{small}
    Therefore, $\Psi_i(\theta^\star)
        \geq
        \Psi_i(\tilde{\theta}_i,\theta_{-i}^\star)$, which is a meta-NE.
\end{proof}
\subsection{Meta-MARL with MPGs}\label{subsec:Meta-MARL_MPG}
We now focus on MPGs. We first establish the potential structure of the induced meta-game under independent adaptation and then formulate an MPG-based, MAML-style meta-MARL framework.

For a distribution of MPGs, define the expected post-adaptation total potential $\Gamma:\mathcal{X}\rightarrow\mathbb{R}$ by
\begin{equation}
    \Gamma(\theta)\coloneqq\mathbb{E}_{\mathcal{M}_m\sim p(\mathcal{M})}\Phi_m(\mathbb{U}_m(\theta)).
\end{equation}
\begin{prop}\label{prop:exactPG}
    For every task $m$, if each agent's adaptation rule is independent of the other agents' policies, i.e.,
\begin{equation}\label{eq:indepAdapt}
\begin{aligned}
    &\mathbb{U}_{i,m}(\theta_i,\theta_{-i})
    =
    \mathbb{U}_{i,m}(\theta_i,\theta_{-i}'),\\
    &\quad
    \forall i\in\mathcal N,\;
    \forall \theta_i\in\mathcal X_i,\;
    \forall \theta_{-i},\theta_{-i}'\in\mathcal X_{-i},
\end{aligned}
\end{equation}
and if the sampled tasks are MPGs with probability one under $p(\mathcal{M})$, then the game $\mathcal{G}$ is an exact potential game with potential function $\Gamma$.
\end{prop}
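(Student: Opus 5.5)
The plan is to verify the exact-potential identity directly. Fix an agent $i$, a profile $\theta=(\theta_i,\theta_{-i})\in\mathcal{X}$, and a deviation $\tilde{\theta}_i\in\mathcal{X}_i$. We must show
\begin{equation*}
\Psi_i(\tilde{\theta}_i,\theta_{-i})-\Psi_i(\theta_i,\theta_{-i})=\Gamma(\tilde{\theta}_i,\theta_{-i})-\Gamma(\theta_i,\theta_{-i}).
\end{equation*}
Both sides are expectations over $\mathcal{M}_m\sim p(\mathcal{M})$. By linearity of expectation it therefore suffices to prove the corresponding identity for $J_{i,m}\circ\mathbb{U}_m$ and $\Phi_m\circ\mathbb{U}_m$ for almost every task $m$. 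This uses the hypothesis that tasks are MPGs with probability one. I will assume the relevant expectations are finite, for instance because rewards and potentials are bounded, so that the difference of expectations equals the expectation of the difference.

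The key step is to examine the adapted profiles. Let $\theta'=\mathbb{U}_m(\theta_i,\theta_{-i})$ and $\tilde{\theta}'=\mathbb{U}_m(\tilde{\theta}_i,\theta_{-i})$. For $j\neq i$, the independence condition \eqref{eq:indepAdapt} says that $\mathbb{U}_{j,m}$ depends only on its own argument $\theta_j$. Since $\theta_j$ is unchanged by agent $i$'s deviation, we get $\tilde{\theta}'_j=\theta'_j$. Hence $\tilde{\theta}'$ and $\theta'$ differ only in the $i$-th component, with $\tilde{\theta}'=(\mathbb{U}_{i,m}(\tilde{\theta}_i,\theta_{-i}),\theta'_{-i})$. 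In other words, a unilateral deviation in initializations induces a unilateral deviation in adapted policies.

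Since $\mathcal{M}_m$ is an MPG and $\mathbb{U}_m$ maps into $\mathcal{X}$, the rewritten MPG identity \eqref{eq:MPGRewrite} applies to this unilateral deviation:
\begin{equation*}
J_{i,m}(\tilde{\theta}')-J_{i,m}(\theta')=\Phi_m(\tilde{\theta}')-\Phi_m(\theta').
\end{equation*}
Taking expectations over $m$ then yields the desired identity, which shows that $\mathcal{G}$ is an exact potential game with potential $\Gamma$.

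The main obstacle is essentially bookkeeping rather than depth. One must be careful that independence is needed only for the other agents' rules $\mathbb{U}_{j,m}$, $j\neq i$; agent $i$'s own adaptation may depend arbitrarily on $\theta_i$. The other point requiring care is measurability and integrability, so that $\Gamma$ and $\Psi_i$ are well defined and the exceptional null set of non-MPG tasks does not affect the expectations.
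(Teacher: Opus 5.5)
Your proposal is correct and follows the paper's proof exactly: for each sampled MPG, apply the identity \eqref{eq:MPGRewrite} to the adapted profiles $\mathbb{U}_m(\tilde{\theta}_i,\theta_{-i})$ and $\mathbb{U}_m(\theta_i,\theta_{-i})$, then take expectations over $p(\mathcal{M})$. You also spell out a step the paper leaves implicit. By \eqref{eq:indepAdapt}, each $\mathbb{U}_{j,m}$ with $j\neq i$ is unaffected by agent $i$'s deviation, so a unilateral change in initializations yields a unilateral change in adapted policies, and that is what justifies invoking \eqref{eq:MPGRewrite}.
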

\begin{proof}
    Under the condition in \eqref{eq:indepAdapt}, the following holds for every sampled MPG (cf. \eqref{eq:MPGRewrite}):
    \begin{small}
    \begin{equation}
    \begin{aligned}
        &J_{i,m}(\mathbb{U}_{m}(\tilde{\theta}_i, \theta_{-i}))-J_{i,m}(\mathbb{U}_{m}({\theta}_i, \theta_{-i}))\\
        &\quad=\Phi_m(\mathbb{U}_{m}(\tilde{\theta}_i, \theta_{-i}))-\Phi_m(\mathbb{U}_{m}({\theta}_i, \theta_{-i})).
    \end{aligned}
    \end{equation}
    \end{small}
    Taking expectations over the task distribution yields
    \begin{equation}
        \Psi_i(\tilde{\theta}_i,\theta_{-i})-\Psi_i(\theta_i,\theta_{-i})=\Gamma(\tilde{\theta}_i,\theta_{-i})-\Gamma(\theta_i,\theta_{-i}),
    \end{equation}
    which completes the proof.
\end{proof}

Together with Assumption~\ref{assump:regularity}, Proposition~\ref{prop:exactPG} implies that
$\Gamma$ attains a global maximum on $\mathcal{X}$, and that every global maximizer
is a meta-NE of $\mathcal{G}$.

In the remainder of this subsection, we study the gradient-based (i.e., MAML-style \cite{MAML}) adaptation rule and parameterize the policy with a neural network (NN). Let $\theta$ denote the parameters of an NN, and therefore, the projection operator can be omitted. We consider one gradient step:
\begin{equation}\label{eq:gradientAdaptation}
\theta_{i,m}'=\mathbb{U}_{i,m}(\theta)=\theta_{i}+\eta_1\nabla_{\theta_{i}}J_{i,m}(\theta),
\end{equation}
where $\eta_1$ is the inner step size.

Because task $m$ is an MPG, \eqref{eq:gradientAdaptation} is equivalent to
\begin{equation}
    \theta_{i,m}'=\theta_{i}+\eta_1\nabla_{\theta_{i}}\Phi_m(\theta),
\end{equation}
and stacking the updates across agents yields
\begin{equation}\label{eq:mpgInner}
    \theta_{m}'=\theta+\eta_1\nabla_{\theta}\Phi_m(\theta),\quad\eta_1>0,
\end{equation}
where $\Phi_m$ is the total potential in task $m$. Equation~\eqref{eq:mpgInner} has the same form as the MAML inner update because an MPG admits a scalar total potential.

The meta-objective therefore becomes
\begin{equation}
\begin{aligned}
    \max_{\theta}\Gamma(\theta)&=\max_{\theta}\mathbb{E}_{\mathcal{M}_m\sim p(\mathcal{M})}
    \Phi_{m}(\theta_m')\\
    &=\max_{\theta}\mathbb{E}_{\mathcal{M}_m\sim p(\mathcal{M})}
    \Phi_{m}\bigl(\theta+\eta_1\nabla_{\theta}\Phi_m(\theta)
    \bigr).
\end{aligned}
\end{equation}
The meta-optimization is performed over $\theta$, whereas the objective is evaluated with the adapted parameters $\theta_m'$. Thus, the MAML-style procedure seeks an initialization from which one or a small number of gradient steps produce high total potential on a new task.

If each $\Phi_m$ is twice differentiable and differentiation can be interchanged with expectation, the exact meta-gradient is
\begin{equation}
\begin{aligned}
\nabla_{\theta}\Gamma(\theta)
&=\mathbb{E}_{\mathcal{M}_m\sim p(\mathcal{M})}\Bigl[
\bigl(\mathbf{I}+\eta_1\nabla_{\theta}^2\Phi_m(\theta)\bigr)^{\top}\\
&\qquad\qquad{}\times\nabla_{\theta_m'}\Phi_m(\theta_m')\Bigr],
\end{aligned}
\end{equation}
where $\mathbf{I}$ is an identity matrix.
Exact-gradient ascent would update $\theta$ according to
\begin{equation}\label{eq:metaUpdate}
    \theta^{(k+1)}=\theta^{(k)}+\eta_2\nabla_{\theta}\Gamma(\theta^{(k)}),\quad\eta_2>0,
\end{equation}
where $\eta_2$ is the outer step size. Following first-order MAML \cite{MAML}, we omit the Hessian term and use the approximation
\begin{equation}\label{eq:firstOrder}
\widetilde{\nabla}_{\theta}\Gamma(\theta)\coloneqq\mathbb{E}_{\mathcal{M}_m\sim p(\mathcal{M})}\left[\nabla_{\theta_m'}\Phi_m(\theta_m')\right].
\end{equation}
Algorithm~\ref{alg:1} summarizes the above implementation.
\begin{algorithm}
\caption{MAML-style MPG-based Meta-Multi-Agent Reinforcement Learning}
\label{alg:1}
 \textbf{Require:}  $p(\mathcal{M})$: distribution over tasks, $\eta_{1},\eta_{2}$: inner and outer step sizes, and $E$: number of meta-training epochs\\
 \textbf{Procedures:} 
\begin{algorithmic}[1]
\State Initialize policy parameter $\theta^{(0)}$.
\For{$k=0,1,\cdots,E-1$}
    \State \parbox[t]{0.85\linewidth}{Sample a batch of tasks $\mathcal{M}_m\sim p(\mathcal{M})$.}
    \For{\textbf{each sampled task} $\mathcal{M}_m$}
    \State \parbox[t]{0.85\linewidth}{Sample $s_0\sim\rho$ and collect the induced trajectories using $\pi_{\theta^{(k)}}$ in $\mathcal{M}_m$.}
    \State \parbox[t]{0.85\linewidth}{Compute $\nabla_{\theta}\Phi_m(\theta^{(k)})$.}
    \State \parbox[t]{0.85\linewidth}{Compute $\theta_m'=\theta^{(k)}+\eta_1\nabla_{\theta}\Phi_{m}(\theta^{(k)})$.}
    \State \parbox[t]{0.85\linewidth}{Sample $s_0'\sim\rho$ and collect the induced trajectories using $\pi_{\theta_m'}$ in $\mathcal{M}_m$.}
    \State \parbox[t]{0.85\linewidth}{Compute $\nabla_{\theta_m'}\Phi_m(\theta_m')$.}
    \EndFor
    \State \parbox[t]{0.95\linewidth}{Compute $\widetilde{\nabla}_{\theta}\Gamma(\theta^{(k)})$ according to \eqref{eq:firstOrder} and update:\\
    $\theta^{(k+1)}=\theta^{(k)}+\eta_2\widetilde{\nabla}_{\theta}\Gamma(\theta^{(k)})$.}
\EndFor
\end{algorithmic}
\end{algorithm}

\section{Applications to Autonomous Driving}\label{Section4}
In this section, we evaluate the MPG-based meta-MARL method in autonomous highway forced-merging scenarios.
\subsection{Simulation Setup}
\vspace{0.25em}
\textbf{(i) Environment and agent configuration}\par
\vspace{0.25em}
Consider the multi-agent highway forced-merging scenario shown in Fig.~\ref{fig:HMMPG}. The on-ramp vehicle is the ego vehicle, and the other vehicles travel in the target lane. We define a vehicle as leading (following) if its longitudinal position is ahead of (behind) that of the ego vehicle along the direction of travel. Then, we select the four nearest leading and four nearest following vehicles relative to the ego vehicle as the game players. Together with the ego vehicle, they form a nine-agent game (i.e., $N=9$).
\begin{figure}
\captionsetup[subfloat]{font=footnotesize}
    \centering
    \includegraphics[width=0.8\linewidth]{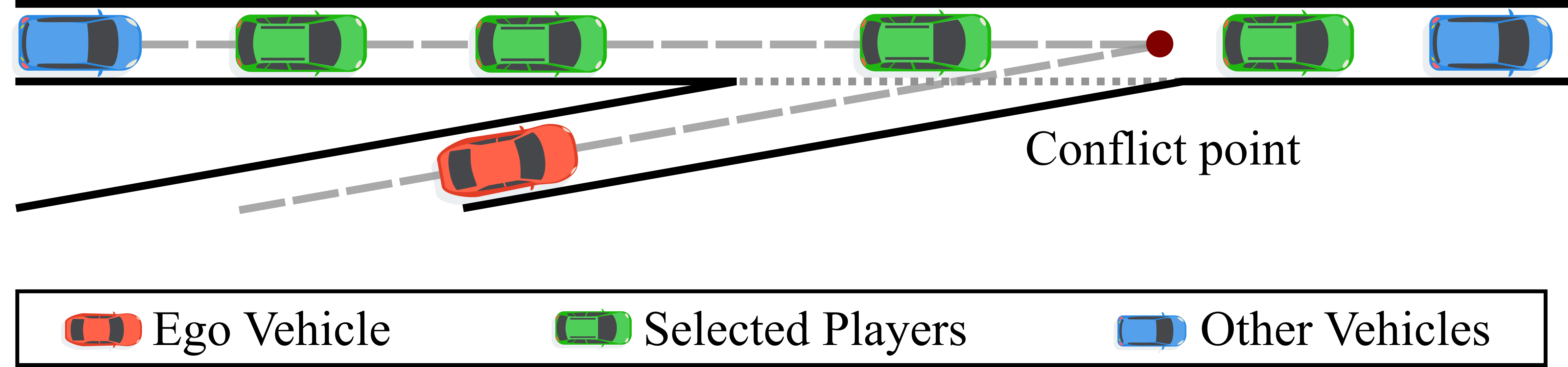}
    \caption{Single-lane highway forced merge scenario.}
    \label{fig:HMMPG}
\end{figure}

We use the kinematic bicycle model \cite{Rajamani2012} to describe the dynamics of each vehicle. At time $t$, vehicle $i$ has control input $[u_{i,t},\,\delta_{i,f,t}]^{\top}$, where $u_{i,t}$ is the longitudinal acceleration and $\delta_{i,f,t}$ is the front-wheel steering angle. The state $s_{i,t}=[x_{i,t},\,y_{i,t},\,v_{i,t},\,\phi_{i,t}]^{\top}$ contains the longitudinal and lateral center-of-mass (CoM) positions in an $x$--$y$ inertial frame, the CoM speed, and the heading angle. The kinematic bicycle model is \cite{Rajamani2012}:
\begin{equation}\label{eq:vehicleDynamics}
\begin{split}
    &x_{i,t+1} = x_{i,t} + v_{i,t} \cos\bigl(\phi_{i,t} + \beta_{i,t}\bigr)\,\Delta t,\\
    &y_{i,t+1} = y_{i,t} + v_{i,t} \sin\bigl(\phi_{i,t} + \beta_{i,t}\bigr)\,\Delta t,\\
    &v_{i,t+1} = v_{i,t} + u_{i,t}\,\Delta t,\\
    &\phi_{i,t+1} = \phi_{i,t} + \frac{v_{i,t}}{l_r} \sin\bigl(\beta_{i,t}\bigr)\,\Delta t,\\
    &\beta_{i,t} = \arctan\Biggl(\frac{l_r}{l_r + l_f}\tan\bigl(\delta_{i,f,t}\bigr)\Biggr),
\end{split}
\end{equation}
where $\beta_{i,t}$ is the sideslip angle between the CoM velocity vector and the vehicle's longitudinal axis, and $l_f$ and $l_r$ are the distances from the CoM to the front and rear axles, respectively. Here, we set $\Delta t=0.5~\mathrm{s}$.

In the simulation, we let each vehicle follow a reference path defined according to the road structure (i.e., vehicles have to be ``on-road"), and thus steering is generated by the reference-path tracking. The action here is thus reduced to a scalar longitudinal acceleration, i.e., $a_{i,t}=u_{i,t}\in[-g,g]$, where $g=9.81~\mathrm{m}/\mathrm{s}^2$. Vehicle speed is constrained to $0~\mathrm{m/s}\leq v_{i,t}\leq30~\mathrm{m/s}$.

\begin{figure}
\vspace*{5pt}
\captionsetup[subfloat]{font=footnotesize}
    \centering
    \includegraphics[width=0.9\linewidth]{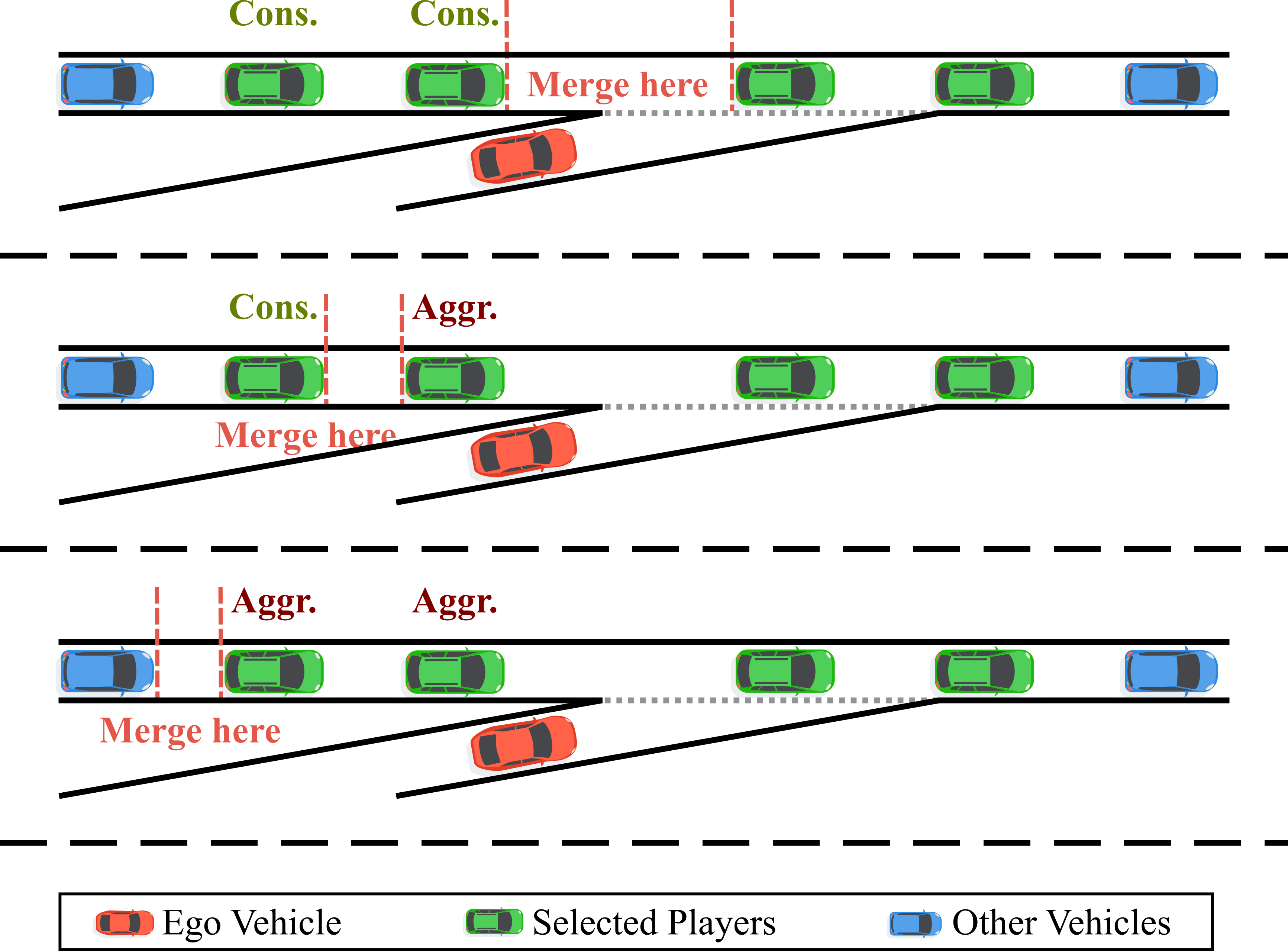}
    \caption{Policy variation under different tasks.}
    \label{fig:HMMetaMARL}
\end{figure}
\begin{figure*}[t]
    \centering

    \subfloat[Vehicles 5--6 aggressive; 7--8 conservative.]{
        \includegraphics[width=0.3\textwidth]{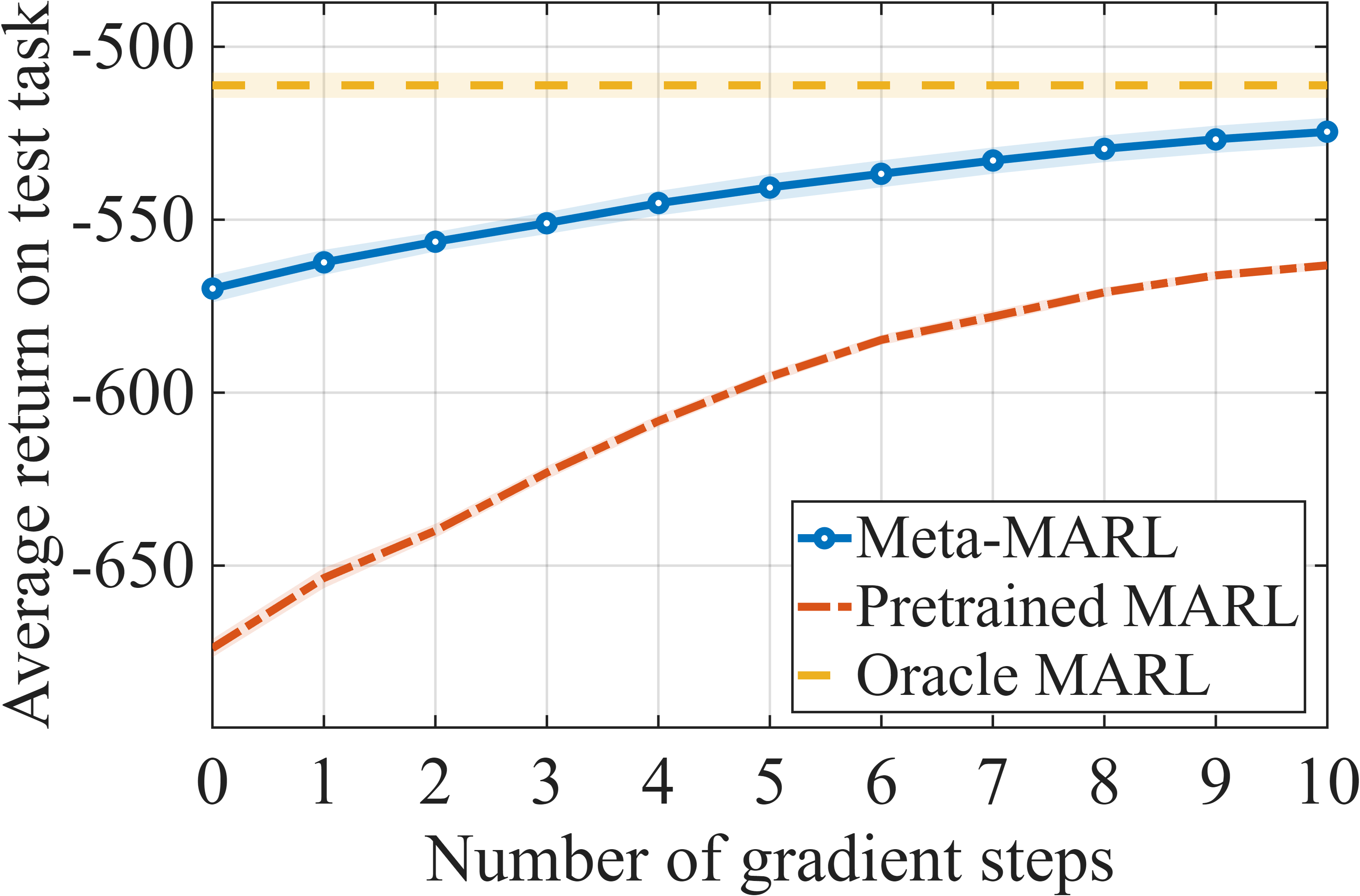}
        \label{fig:Aggr2Cons2}
    }
    \hfill
    \subfloat[Vehicles 5--7 aggressive; 8 conservative.]{
        \includegraphics[width=0.3\textwidth]{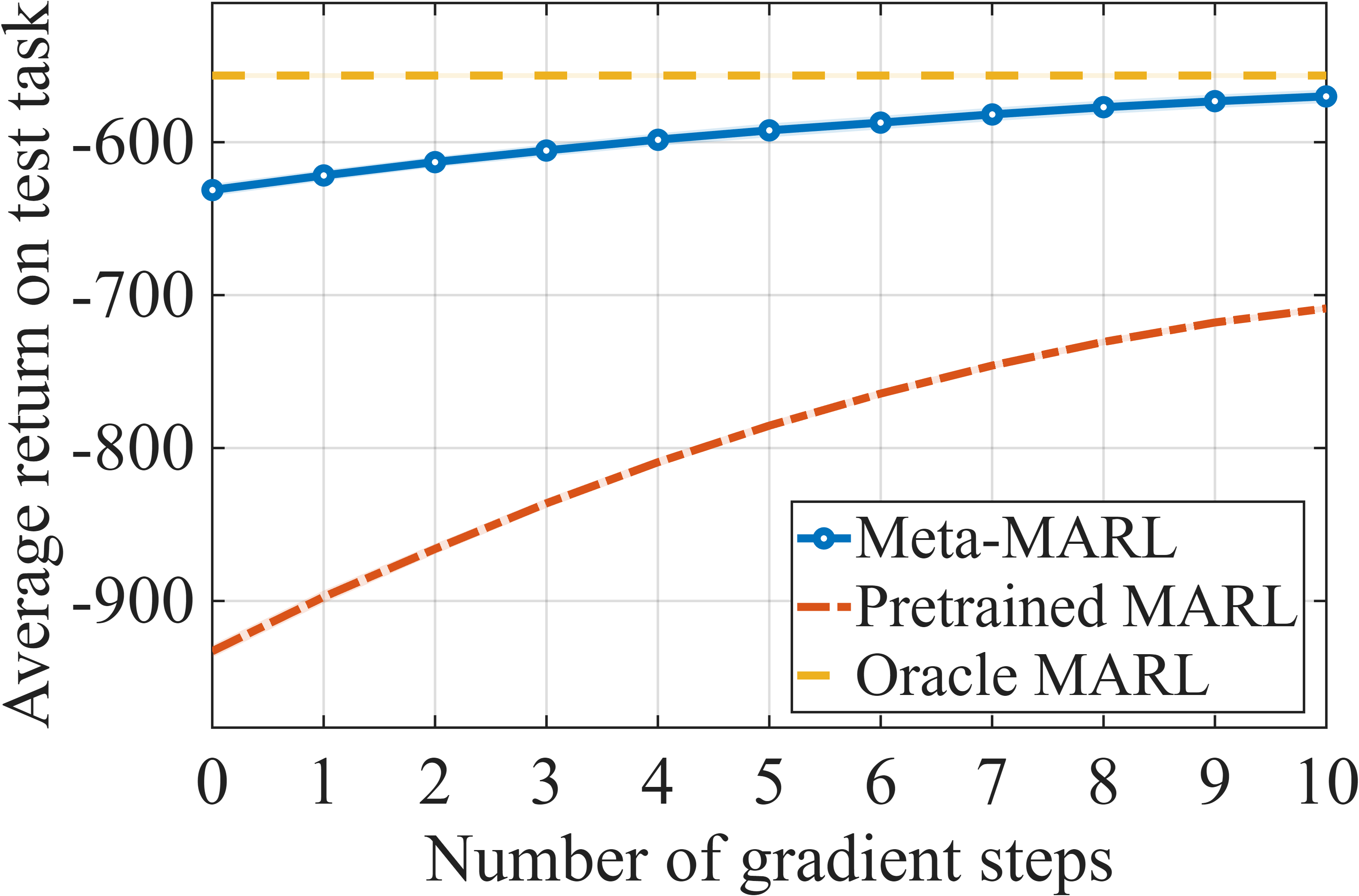}
        \label{fig:Aggr3Cons1}
    }
    \hfill
    \subfloat[Vehicles 5--8 aggressive.]{
        \includegraphics[width=0.3\textwidth]{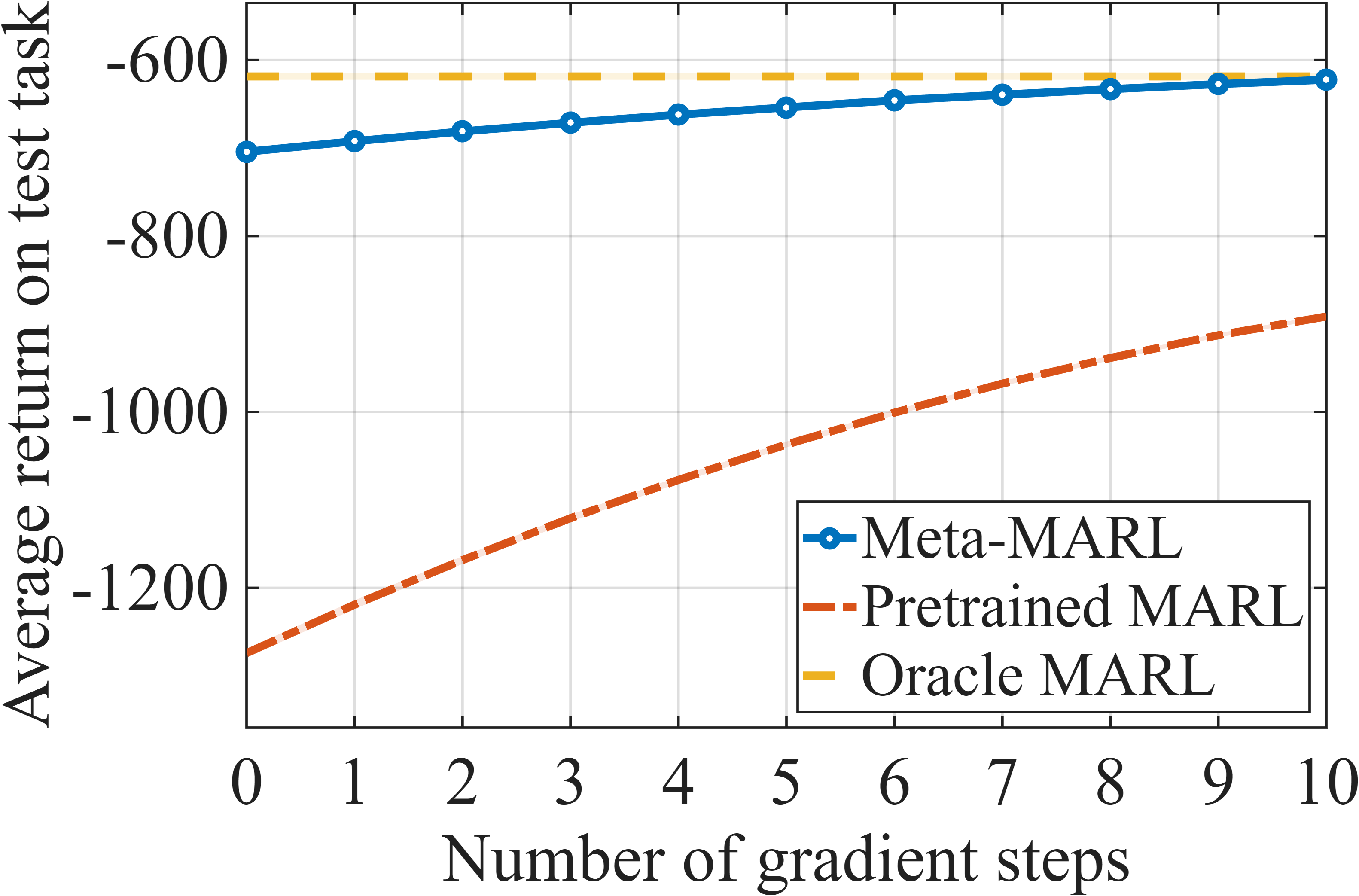}
        \label{fig:Aggr4Cons0}
    }

    \caption{Policy adaptation under different tasks.}
    \label{fig:adaptationCurves}
\end{figure*}

\vspace{0.25em}
\textbf{(ii) Reward function design and task definition}\par
\vspace{0.25em}
We follow the sufficient conditions for MPG construction in \cite{MPGTVT} and design the reward functions accordingly. The ego vehicle is considered in the target lane once it passes the merging conflict point (see Fig. \ref{fig:HMMPG}), whose longitudinal position, $x_c=180~\mathrm{m}$, is determined by the road geometry. 

Let $\mathcal{N}_{i,\mathrm{s}}\subset\mathcal{N}$ and
$\mathcal{N}_{i,\mathrm{c}}\subset\mathcal{N}$ denote the sets of vehicles
traveling in the same lane as vehicle $i$ and in different lanes from vehicle $i$,
respectively. In task $m$, vehicle $i$ receives
\begin{equation}\label{eq:evalTaskReward}
\small
\begin{aligned}
    r_{i,m}(s_t,a_t)&=\alpha_{i,1}^{(m)}r_i^{\mathrm{self},1}+\alpha_{i,2}r_i^{\mathrm{self},2}\\
    &\quad+\sum_{j\in\mathcal{N}_{i,\mathrm{s}}}w_{ij,1}r_{ij}^{\mathrm{joint},1}
    +\sum_{j\in\mathcal{N}_{i,\mathrm{c}}}w_{ij,2}r_{ij}^{\mathrm{joint},2},
\end{aligned}
\end{equation}
where we omit the arguments of each reward component on the right-hand side and consider nonnegative weights. Each term on the right-hand side of \eqref{eq:evalTaskReward} is specified as follows.

The first term rewards desired-speed tracking:
\begin{equation}\label{simeq:self1}
    r_i^{\mathrm{self},1}(s_{i,t},a_{i,t})=-(v_{i,t}-v_{i,d})^2,
\end{equation}
where $v_{i,d}=15~\mathrm{m/s}$ is the desired speed for each vehicle.

The second term rewards small accelerations:
\begin{equation}\label{simeq:self2}
    r_i^{\mathrm{self},2}(s_{i,t},a_{i,t})=-(u_{i,t})^2.
\end{equation}

The third and fourth terms are for collision avoidance. The third term rewards same-lane collision avoidance:
\begin{equation}\label{eq:piecewise}
\begin{aligned}
r_{ij}^{\mathrm{joint},1}(s_{i,t},s_{j,t})&=-\frac{1}{\tau_{ij,t}+\epsilon_{\tau}},\\
    \tau_{ij,t}&=\frac{|x_{i,t}-x_{j,t}|}{\max\{|v_{i,t}-v_{j,t}|,v_c\}},
\end{aligned}
\end{equation}
where $\tau_{ij,t}$ is a timescale for same-lane longitudinal interactions, and $v_c>0$ is a relative-speed threshold and $\epsilon_{\tau}>0$ is a regularizer to prevent division by zero.

The fourth term rewards larger arrival-time gaps between vehicles in different lanes that have not yet passed the conflict point. The reward is designed as:
\begin{equation}\label{simeq:jointReward2}
\begin{split}
    &r_{ij}^{\mathrm{joint},2}(s_{i,t},s_{j,t})\\
          &=-\frac{\mathbf{1}_{\{d_i(t)>0,\;d_j(t)>0\}}}{\sqrt{T_i(t)T_j(t)}(T_i(t)-T_j(t))^2+\epsilon_3},
\end{split}
\end{equation}
where $T_i(t)=d_i(t)/(v_{i,t}+\epsilon_v)$ and $T_j(t)=d_j(t)/(v_{j,t}+\epsilon_v)$ estimate the times required for vehicles $i$ and $j$ to reach the conflict point, and $d_i(t)$ is vehicle $i$'s remaining distance along its reference path. Here, $\epsilon_v>0$ and $\epsilon_3>0$ are to prevent the denominator from being zero. The factor $\sqrt{T_i(t)T_j(t)}$ weights the arrival-time gap by a ``risk level'': for a fixed $T_i(t)-T_j(t)$, a smaller value of $\sqrt{T_i(t)T_j(t)}$ indicates greater risk.

Let $\mathcal{I}_{\mathrm{s}}$ be the undirected set of same-lane interaction pairs and $\mathcal{I}_{\mathrm{c}}$ the undirected set of different-lane interaction pairs with merging conflicts. Hence, the potential for task $m$ is
\begin{equation}
\small
\begin{aligned}
\phi_m(s_t,a_t)
&=\sum_{i\in\mathcal{N}}
\left[\alpha_{i,1}^{(m)}r_i^{\mathrm{self},1}
+\alpha_{i,2}r_i^{\mathrm{self},2}\right]\\
&\quad+\sum_{\{i,j\}\in\mathcal{I}_{\mathrm{s}}}
w_{ij,1}r_{ij}^{\mathrm{joint},1}
+\sum_{\{i,j\}\in\mathcal{I}_{\mathrm{c}}}
w_{ij,2}r_{ij}^{\mathrm{joint},2},
\end{aligned}
\end{equation}
where we omit the arguments on the right-hand side.

Given the above setup, the component $r$ in $\mathcal{M}=(\mathcal{N},\mathcal{S},\mathcal{A}, P,r,\gamma,\rho)$ is task-dependent. Hence, we define the task distribution by varying the weighting parameters
$\{\alpha_{i,1}^{(m)}\}_{i\in\mathcal{N}\setminus\{1\}}$ of the surrounding
vehicles, while fixing the ego vehicle's weight $\alpha_{1,1}^{(m)}$. In other words, we use the eight-dimensional vector $[\alpha_{2,1}^{(m)},\ldots,\alpha_{9,1}^{(m)}]^{\top}$, to specify a task. Here we select the parameter $\alpha_{i,1}^{(m)}$ because it characterizes vehicle $i$'s driving aggressiveness: a larger $\alpha_{i,1}^{(m)}$ indicates a stronger desire to keep its speed (or equivalently, less emphasis on maintaining a safe distance). 
\begin{remark}
In general, any subset of the parameters within the reward function, including the weighting parameters and the desired speeds, may be varied to construct $p(\mathcal{M})$.
\end{remark}

\vspace{0.25em}
\textbf{(iii) Policy network and parameter sharing}\par
\vspace{0.25em}
Each vehicle is controlled by a deterministic policy represented by a NN. Because the vehicles have identical dynamics, we adopt parameter sharing \cite{parameterSharing, scalingWithMARL} to improve training efficiency: all agents share a common policy network, while a one-hot vehicle index enables distinct actions. We note that such an architecture may restrict the parameter space and lead to potential suboptimality; however, the potential suboptimality is not observed in our simulations. For vehicle $i$, the network receives a global observation containing its normalized position, velocity, and lane information; the relative positions, velocities, and lane information of all other vehicles; and a one-hot encoding of its index. As shown in Fig. \ref{fig:policyNN}, the policy network comprises an input layer, two hidden layers, and an output layer:
\begin{enumerate}
    \item \textbf{Input layer:} a global observation of all vehicles.
    
    \item \textbf{Hidden layers:} two fully connected layers, each followed by a leaky rectified linear unit (Leaky ReLU) activation function.
    
    \item \textbf{Output layer:} a fully connected layer followed by a hyperbolic tangent (tanh) activation function. The output is then scaled by $9.81~\mathrm{m/s^2}$.
\end{enumerate}
\begin{figure}[htbp]
\captionsetup[subfloat]{font=footnotesize}
    \centering
    \includegraphics[width=0.95\linewidth]{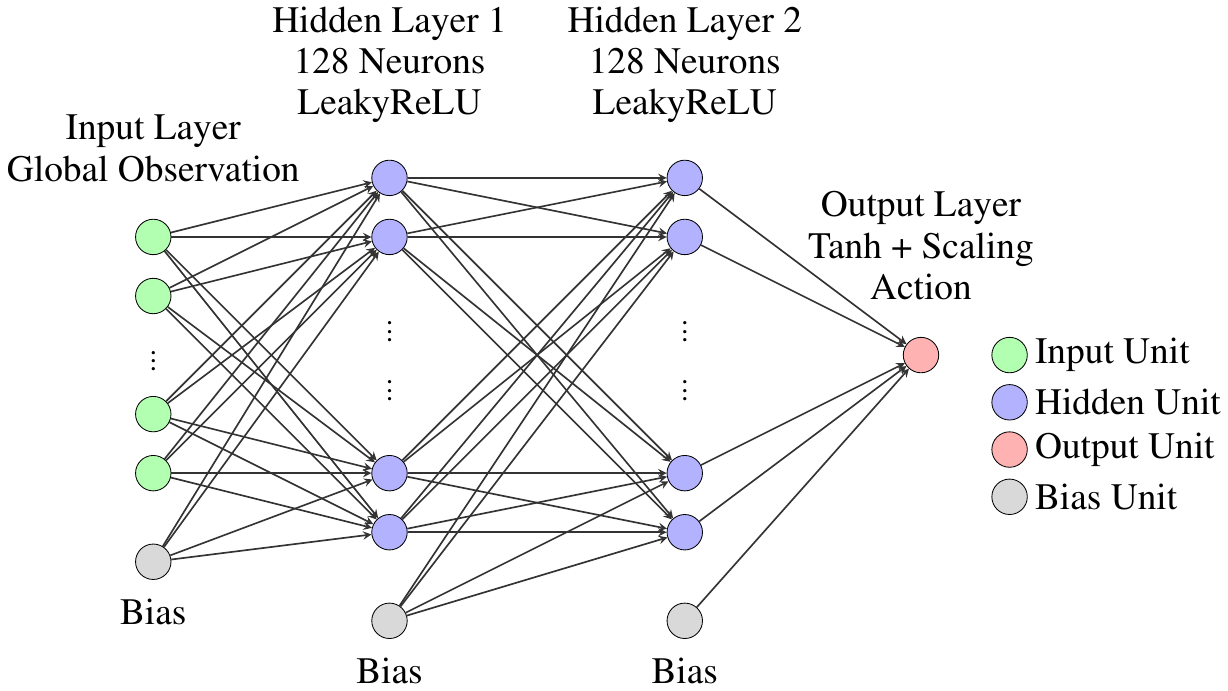}
    \caption{Policy network architecture.}
    \label{fig:policyNN}
\end{figure}
\begin{figure}[htbp]
\captionsetup[subfloat]{font=footnotesize}
    \centering
    \includegraphics[width=0.65\linewidth]{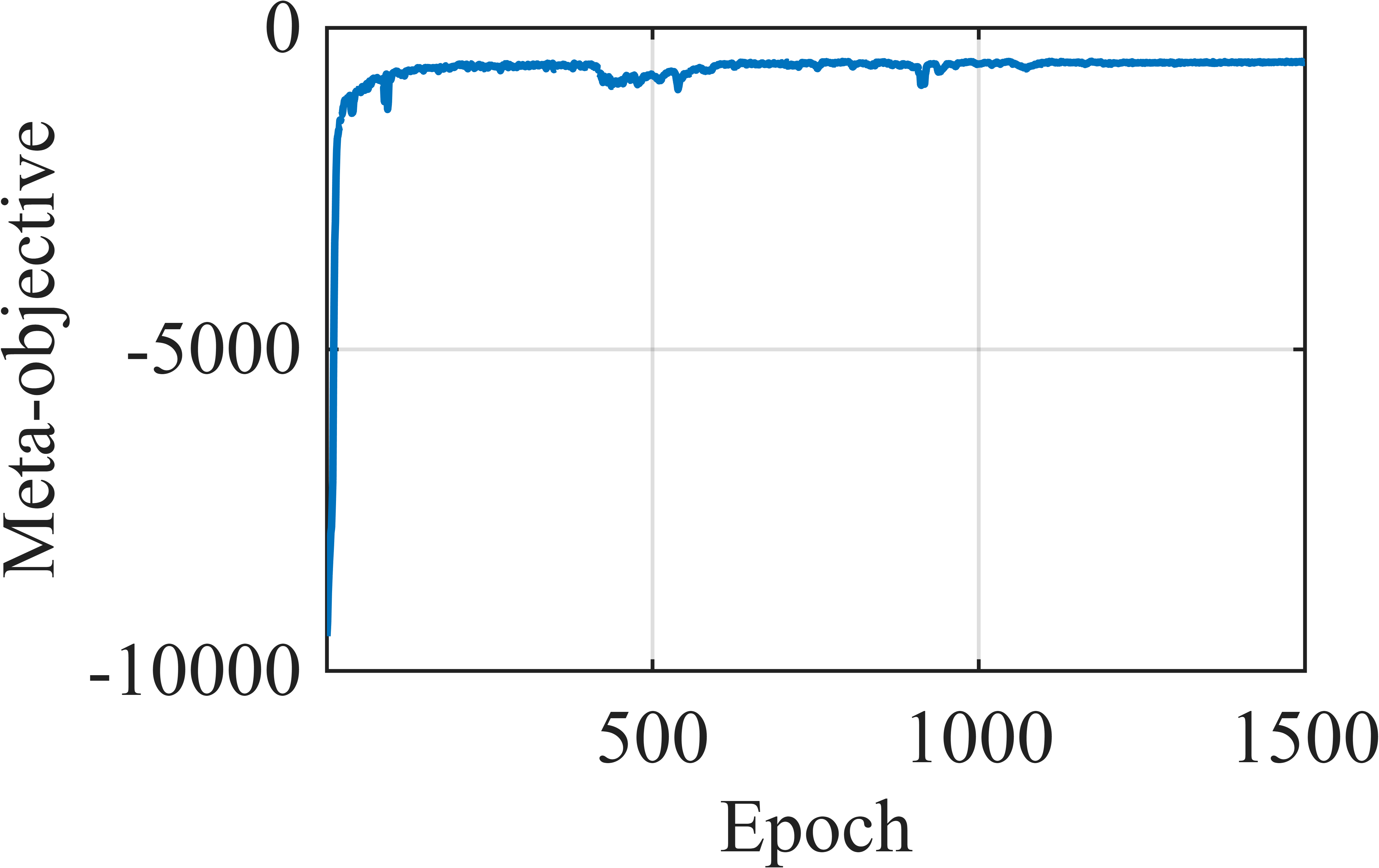}
    \caption{Convergence of policy during meta-training.}
    \label{fig:convergence}
\end{figure}

\vspace{0.25em}
\textbf{(iv) Training setup and convergence results}\par
\vspace{0.25em}
During meta-training, we set $\gamma=0.99$ and estimate $\Gamma$ with a finite horizon of $H=60$ decision steps, which corresponds to $H\Delta t=30~\mathrm{s}$:
\begin{equation}
\small
\begin{aligned}
    \Gamma^H(\theta)&\coloneqq\mathbb{E}_{\mathcal{M}_m\sim p(\mathcal{M})}\Phi_{m}^H(\theta_m')\\
    &=\mathbb{E}_{\mathcal{M}_m\sim p(\mathcal{M})}\mathbb{E}_{s_0\sim\rho}\left[\sum_{t=0}^{H-1}\gamma^t\phi_m(s_t,a_t)\Biggm|\pi_{\theta_m'},s_0\right].
\end{aligned}
\end{equation}

The initial states are generated using stratified sampling to ensure broad coverage of the state space. Vehicle longitudinal positions and velocities are sampled within predefined ranges. For each pair of consecutive vehicles in the same lane, the initial longitudinal CoM distance is at least $7~\mathrm{m}$, and, when the rear vehicle is closing on the vehicle ahead, the time-to-collision (TTC), computed as the longitudinal CoM distance divided by the closing speed, is at least $4~\mathrm{s}$.
\begin{figure*}[!t]
\vspace*{5pt}
\captionsetup[subfloat]{font=footnotesize}
\centering
\subfloat[]{
  \centering
  \includegraphics[width=0.90\textwidth,height=0.20\textheight,keepaspectratio]{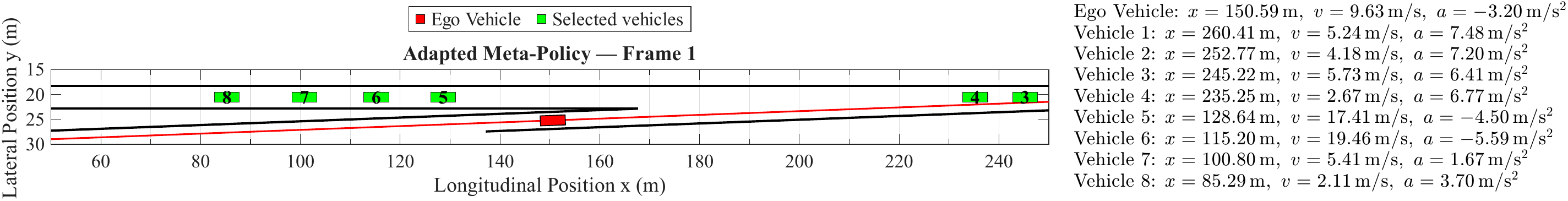}
  \label{fig:Task11}
}
\par
\subfloat[]{
  \centering
  \includegraphics[width=0.90\textwidth,height=0.20\textheight,keepaspectratio]{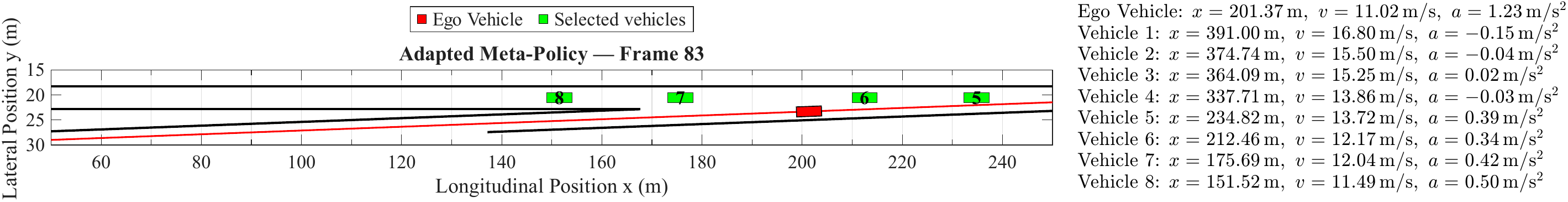}
  \label{fig:Task12}
}
\caption{Illustration of the merging behavior in Task 1. (a) Ego vehicle before merging. (b) Ego vehicle after merging.}
\label{fig:Task1}
\end{figure*}
\begin{figure*}[!t]
\captionsetup[subfloat]{font=footnotesize}
\centering
\subfloat[]{
  \centering
  \includegraphics[width=0.90\textwidth,height=0.20\textheight,keepaspectratio]{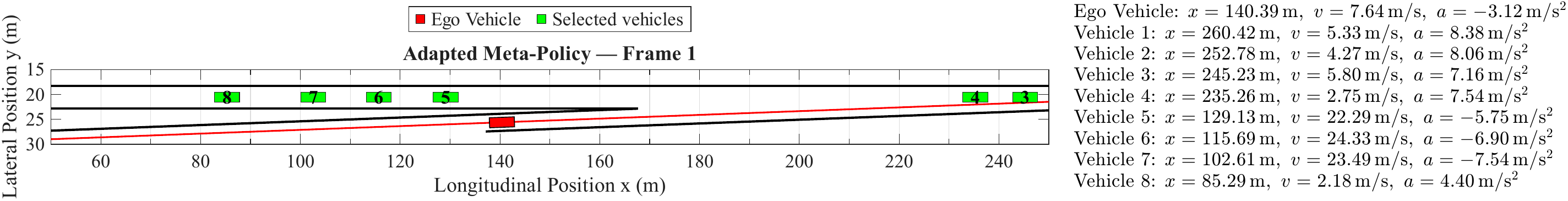}
  \label{fig:Task21}
}
\par
\subfloat[]{
  \centering
  \includegraphics[width=0.90\textwidth,height=0.20\textheight,keepaspectratio]{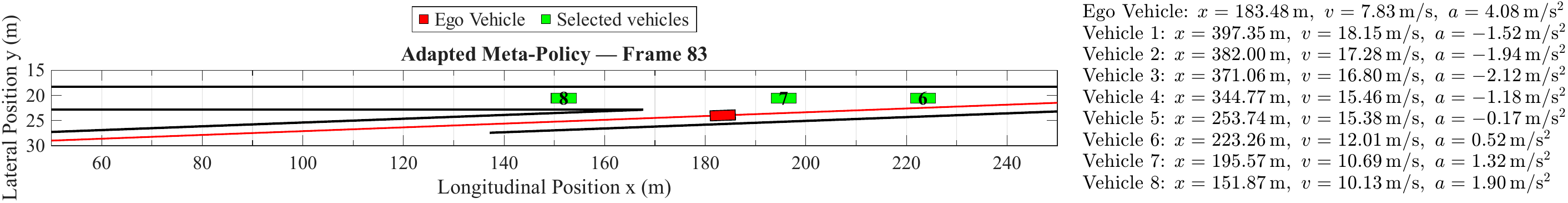}
  \label{fig:Task22}
}
\caption{Illustration of the merging behavior in Task 2. (a) Ego vehicle before merging. (b) Ego vehicle after merging.}
\label{fig:Task2}
\end{figure*}
\begin{figure*}[!t]
\captionsetup[subfloat]{font=footnotesize}
\centering
\subfloat[]{
  \centering
  \includegraphics[width=0.90\textwidth,height=0.20\textheight,keepaspectratio]{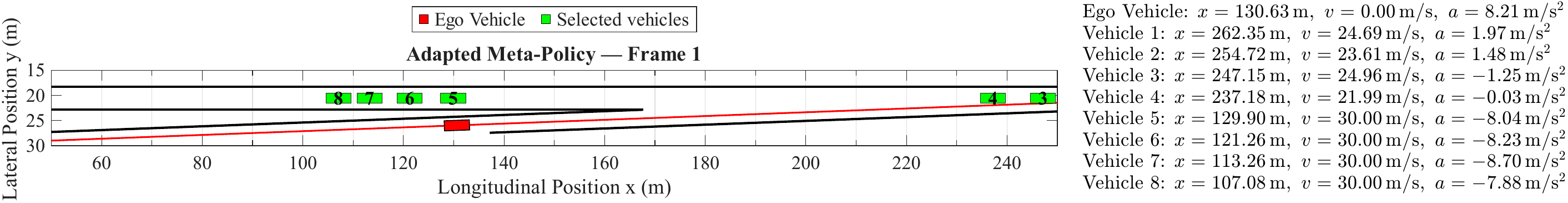}
  \label{fig:Task31}
}
\par
\subfloat[]{
  \centering
  \includegraphics[width=0.90\textwidth,height=0.20\textheight,keepaspectratio]{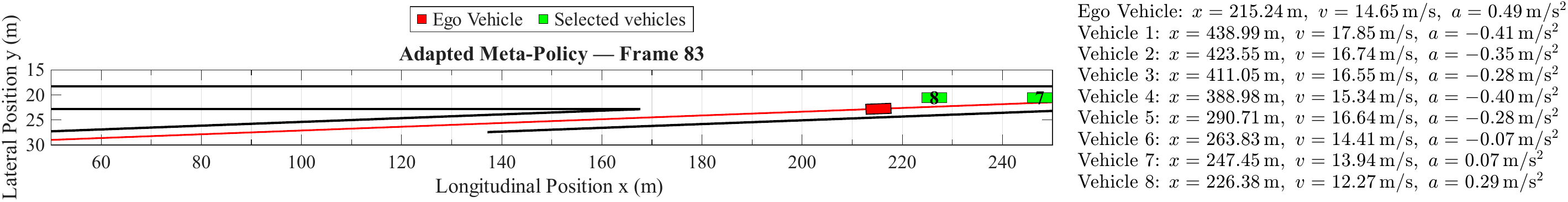}
  \label{fig:Task32}
}
\caption{Illustration of the merging behavior in Task 3. (a) Ego vehicle before merging. (b) Ego vehicle after merging.}
\label{fig:Task3}
\end{figure*}

Meta-training is performed for 1500 epochs. At each meta-training epoch, we sample 16 tasks and perform one inner-loop stochastic gradient ascent (SGA) step with learning rate $\eta_1=10^{-3}$ using trajectories induced by the meta-policy to obtain the adapted policies. We then evaluate the adapted policies to estimate the first-order meta-gradient and update the meta-policy in the outer-loop using the adaptive moment estimation (Adam) optimizer \cite{Adam} with learning rate $\eta_2=10^{-3}$. Fig.~\ref{fig:convergence} plots the estimated meta-objective $\Gamma^H$ during meta-training and shows empirical convergence over the reported training run.
\begin{table*}[htbp]
\centering
\caption{Statistical test results for the evaluated policies}
\label{tab:quantitativeResults}
\begin{tabular}{c|c|c|c} 
\hline
Performance metric & Reference policy & Adapted meta-policy & Adapted pretrained policy  \\ \hline
Collision frequency & 0/5000 & 0/5000 & 6/5000 \\ \hline
Mean minimum inter-vehicle distance (m) & 14.9540 & 14.9783 & 13.1770\\ \hline
Mean ego-vehicle speed ($\mathrm{m/s}$) & 11.7712 & 12.1966 & 11.4907\\ \hline
Mean absolute ego acceleration ($\mathrm{m}/\mathrm{s}^{2}$) & 0.4586 & 0.5912 & 0.6424\\ \hline
Mean absolute ego jerk ($\mathrm{m}/\mathrm{s}^{3}$) & 0.2446 & 0.6424 & 0.1169\\ \hline
\end{tabular}
\end{table*}
\subsection{Task-Specific Evaluation}\label{subsec:specific}
To evaluate the adaptation performance of the learned meta-policy for the ego vehicle when interacting with surrounding vehicles exhibiting different driving styles, we consider three specific tasks (see Fig.~\ref{fig:HMMetaMARL}).  In each task, the vehicle policies are adapted from: (a) the \textbf{\textit{meta-policy}} initialization and (b) a \textbf{\textit{pretrained policy}}, which is an NE in a task different from the target test task. After initialization, both policies adapt according to gradient ascent. We compare the adapted policies with (c) an \textbf{\textit{oracle policy}}, which is the NE in the target test task. The ``average return on test task'' in Fig.~\ref{fig:adaptationCurves} refers to the average estimated total potential $\Phi_{m}^H$ evaluated on the target test task. The ego vehicle behaviors described below are from the adapted meta-policy.

\textit{Task 1:} Vehicles 5 and 6 exhibit aggressive behavior with large $\alpha_{i,1}$, while vehicles 7 and 8 exhibit conservative behavior with small $\alpha_{i,1}$. As the gap between vehicles 6 and 7 opens, the ego decelerates to let vehicles 5 and 6 pass, then merges into the gap. Two key moments are shown in Fig.~\ref{fig:Task1}, and the corresponding adaptation curves in Fig.~\ref{fig:adaptationCurves}\subref{fig:Aggr2Cons2}.

\textit{Task 2:} Vehicles 5, 6, and 7 exhibit aggressive behavior, while vehicle 8 exhibits conservative behavior. The ego vehicle first decelerates to let vehicles 5--7 pass and then merges into the gap between vehicles 7 and 8. Two key moments are shown in Fig.~\ref{fig:Task2}. The corresponding adaptation curves are presented in Fig.~\ref{fig:adaptationCurves}\subref{fig:Aggr3Cons1}.

\textit{Task 3:} Vehicles 5--8 all exhibit aggressive behavior. The ego vehicle decelerates to let them pass, then merges behind vehicle 8. Two key moments are shown in Fig.~\ref{fig:Task3}, with the corresponding adaptation curves in Fig.~\ref{fig:adaptationCurves}\subref{fig:Aggr4Cons0}.

Figs.~\ref{fig:Task1}--\ref{fig:Task3} show that the meta-policy can enable the ego vehicle to quickly adapt to various surrounding vehicles' driving styles and perform reasonably well in terms of ``where" and ``how" to merge. Fig.~\ref{fig:adaptationCurves}, in addition, shows that (i) the meta-policy adapts faster than the pretrained policy, and that (ii) the meta-policy converges to the optimal (i.e., the oracle) within around $10$ gradient steps.
\subsection{Statistical Evaluation}
To provide a more comprehensive evaluation and to assess the robustness of the meta-policy, we conduct a statistical comparison. Specifically, we evaluate 50 randomly selected tasks, and for each task, we consider 100 randomly selected scenarios (i.e., initial states). Therefore, in total, $5000$ random scenarios are tested. The performance of the (a) meta-policy, (b) pretrained policy, and (c) oracle policy is shown in Table~\ref{tab:quantitativeResults}. In Table~\ref{tab:quantitativeResults}, a collision occurs when the ego vehicle contacts a surrounding vehicle during a test scenario. The remaining four metrics are computed for each scenario and averaged over all $5000$ scenarios. We observe the following:
\begin{enumerate}
    \item \textbf {\textit {Safety:}} No collisions were observed for the adapted meta-policy, whereas 6 collisions were observed for the pretrained policy. In addition, the adapted meta-policy leads to a larger minimum inter-vehicle distance. Both metrics suggest that the adapted meta-policy performs better than the pretrained policy in terms of safety. 
    \item \textbf {\textit {Travel efficiency:}} The adapted meta-policy leads to a higher ego-vehicle speed than the pretrained policy, indicating that the better safety performance does not compromise the travel efficiency.
    \item \textbf {\textit {Energy efficiency:}} The adapted meta-policy exhibited lower absolute acceleration than the pretrained policy, indicating less control effort.
    \item \textbf {\textit {Ride comfort:}} The adapted meta-policy exhibited a higher absolute jerk than the pretrained policy, indicating that the better safety, travel efficiency, and energy efficiency may come at the price of ride comfort.
\end{enumerate}

\section{CONCLUSIONS}\label{Section5}
Meta-MARL enables rapid adaptation of interactive policy across tasks in MASs, including changes in both the environment and agents' interaction patterns. We defined a meta-NE to represent the desired policy profile in which no agent can improve its expected post-adaptation return through a unilateral change. We established sufficient conditions under which meta-NEs are equivalent to first-order stationary policies of the induced meta-game. For MPG tasks with independent adaptation, we showed that the induced meta-game has an exact potential; for coupled MAML-style adaptation, we formulated a MPG-based meta-MARL framework using a first-order MAML approximation. Numerical studies in autonomous driving applications show that the developed meta-policy exhibits fast adaptations to various driving styles of the surrounding vehicles. Statistical studies suggest that the meta-policy performs better than the pretrained policy in terms of safety, travel efficiency, and energy efficiency. 



\bibliographystyle{IEEEtran}
\bibliography{Reference}

@inproceedings{Adam,
  author    = {Kingma, Diederik P. and Ba, Jimmy},
  title     = {Adam: A Method for Stochastic Optimization},
  booktitle = {International Conference on Learning Representations (ICLR)},
  year      = {2015},
  url       = {https://arxiv.org/abs/1412.6980}
}

@inproceedings{MAPGInMPG,
title={Global Convergence of Multi-Agent Policy Gradient in {Markov} Potential Games},
author={Stefanos Leonardos and Will Overman and Ioannis Panageas and Georgios Piliouras},
booktitle={International Conference on Learning Representations},
year={2022},
url={https://openreview.net/forum?id=gfwON7rAm4}
}

@inproceedings{Bayes-Adaptive,
  title = "VariBAD: A Very Good Method for Bayes-Adaptive Deep RL via Meta-Learning",
  author = "Luisa Zintgraf and Kyriacos Shiarlis and Maximilian Igl and Sebastian Schulze and Yarin Gal and Katja Hofmann and Shimon Whiteson",
  year = "2020",
  booktitle = "ICLR 2020: Proceedings of the Eighth International Conference on Learning Representations",
  month = "May"
}

@inproceedings{RL_JobShopSchedule,
 author = {Zhang, Cong and Song, Wen and Cao, Zhiguang and Zhang, Jie and Tan, Puay Siew and Chi, Xu},
 booktitle = {Advances in Neural Information Processing Systems},
 pages = {1621--1632},
 publisher = {Curran Associates, Inc.},
 title = {Learning to Dispatch for Job Shop Scheduling via Deep Reinforcement Learning},
 volume = {33},
 year = {2020}
}

@InProceedings{scalingWithMARL,
  title = 	 {Scaling Multi-Agent Reinforcement Learning with Selective Parameter Sharing},
  author =       {Christianos, Filippos and Papoudakis, Georgios and Rahman, Muhammad A and Albrecht, Stefano V},
  booktitle = 	 {Proceedings of the 38th International Conference on Machine Learning},
  pages = 	 {1989--1998},
  year = 	 {2021},
  volume = 	 {139},
  month = 	 {18--24 Jul},
}

@misc{parameterSharing,
      title={Revisiting Parameter Sharing in Multi-Agent Deep Reinforcement Learning}, 
      author={J. K. Terry and Nathaniel Grammel and Sanghyun Son and Benjamin Black and Aakriti Agrawal},
      year={2023},
      note={arXiv:2005.13625}
}

@ARTICLE{MLiu_PCPG,
  author={Liu, Mushuang and Tseng, H. Eric and Filev, Dimitar and Girard, Anouck and Kolmanovsky, Ilya},
  journal={IEEE Transactions on Control Systems Technology}, 
  title={Safe and Human-Like Autonomous Driving: A Predictor–Corrector Potential Game Approach}, 
  year={2024},
  volume={32},
  number={3},
  pages={834-848},
  doi={10.1109/TCST.2023.3332438}}

@InProceedings{PGA4MARL,
  title = 	 {A Policy Gradient Algorithm for Learning to Learn in Multiagent Reinforcement Learning},
  author =       {Kim, Dong Ki and Liu, Miao and Riemer, Matthew D and Sun, Chuangchuang and Abdulhai, Marwa and Habibi, Golnaz and Lopez-Cot, Sebastian and Tesauro, Gerald and How, Jonathan},
  booktitle = 	 {Proceedings of the 38th International Conference on Machine Learning},
  pages = 	 {5541--5550},
  year = 	 {2021},
  editor = 	 {Meila, Marina and Zhang, Tong},
  volume = 	 {139},
  series = 	 {Proceedings of Machine Learning Research},
  month = 	 {18--24 Jul},
  publisher =    {PMLR}
}

@ARTICLE{MLiu_PotentialGame,
  author={Liu, Mushuang and Kolmanovsky, Ilya and Tseng, H. Eric and Huang, Suzhou and Filev, Dimitar and Girard, Anouck},
  journal={IEEE Transactions on Intelligent Transportation Systems}, 
  title={Potential Game-Based Decision-Making for Autonomous Driving}, 
  year={2023},
  volume={24},
  number={8},
  pages={8014-8027},
  doi={10.1109/TITS.2023.3264665}}

@book{marl-book,
  author = {Stefano V. Albrecht and Filippos Christianos and Lukas Sch\"afer},
  title = {Multi-Agent Reinforcement Learning: Foundations and Modern Approaches},
  publisher = {MIT Press},
  year = {2024},
  url = {https://www.marl-book.com}
}

@book{Rajamani2012,
  author    = {Rajamani, Rajesh},
  title     = {Vehicle Dynamics and Control},
  edition   = {2nd},
  publisher = {Springer},
  address   = {New York, NY, USA},
  year      = {2012},
  isbn      = {978-1-4614-1432-2},
  doi       = {10.1007/978-1-4614-1433-9}
}

@article{PotentialGames,
title = {Potential Games},
journal = {Games and Economic Behavior},
volume = {14},
number = {1},
pages = {124-143},
year = {1996},
issn = {0899-8256},
doi = {https://doi.org/10.1006/game.1996.0044},
author = {Dov Monderer and Lloyd S. Shapley},
}

@inproceedings{MA-Meta-RL,
 author = {Mao, Weichao and Qiu, Haoran and Wang, Chen and Franke, Hubertus and Kalbarczyk, Zbigniew and Iyer, Ravishankar and Basar, Tamer},
 booktitle = {Advances in Neural Information Processing Systems},
 doi = {10.52202/075280-2906},
 pages = {66556--66570},
 publisher = {Curran Associates, Inc.},
 title = {Multi-Agent Meta-Reinforcement Learning: Sharper Convergence Rates with Task Similarity},
 volume = {36},
 year = {2023}
}

@InProceedings{Think2Drive,
author="Li, Qifeng
and Jia, Xiaosong
and Wang, Shaobo
and Yan, Junchi",
title="Think2Drive: Efficient Reinforcement Learning by Thinking with Latent World Model for Autonomous Driving (in {CARLA}-V2)",
booktitle="Computer Vision -- ECCV 2024",
year="2025",
publisher="Springer Nature Switzerland",
address="Cham",
pages="142--158",
isbn="978-3-031-72995-9"
}

@inproceedings{EfficientRLforAD,
  title={Efficient Reinforcement Learning for Autonomous Driving with Parameterized Skills and Priors},
  author={Wang, Letian and Liu, Jie and Shao, Hao and Wang, Wenshuo and Chen, Ruobing and Liu, Yu and Waslander, Steven L.},
  booktitle={Robotics: Science and Systems XIX},
  year={2023},
  doi={10.15607/RSS.2023.XIX.102}
}

@book{putermanMarkov,
  author    = {Martin L. Puterman},
  title     = {Markov Decision Processes: Discrete Stochastic Dynamic Programming},
  publisher = {John Wiley \& Sons},
  address   = {New York, NY, USA},
  year      = {1994}
}

@InProceedings{PEARL,
  title = 	 {Efficient Off-Policy Meta-Reinforcement Learning via Probabilistic Context Variables},
  author =       {Rakelly, Kate and Zhou, Aurick and Finn, Chelsea and Levine, Sergey and Quillen, Deirdre},
  booktitle = 	 {Proceedings of the 36th International Conference on Machine Learning},
  pages = 	 {5331--5340},
  year = 	 {2019},
  volume = 	 {97},
  series = 	 {Proceedings of Machine Learning Research},
  month = 	 {09--15 Jun},
  publisher =    {PMLR},
}

@ARTICLE{InterpretableEnd2End,
  author={Chen, Jianyu and Li, Shengbo Eben and Tomizuka, Masayoshi},
  journal={IEEE Transactions on Intelligent Transportation Systems}, 
  title={Interpretable End-to-End Urban Autonomous Driving With Latent Deep Reinforcement Learning}, 
  year={2022},
  volume={23},
  number={6},
  pages={5068-5078},
  doi={10.1109/TITS.2020.3046646}}

@article{DroneRacing,
  author  = {Kaufmann, Elia and Bauersfeld, Leonard and Loquercio, Antonio and M{\"u}ller, Matthias and Koltun, Vladlen and Scaramuzza, Davide},
  title   = {Champion-level drone racing using deep reinforcement learning},
  journal = {Nature},
  year    = {2023},
  volume  = {620},
  number  = {7976},
  pages   = {982--987},
  doi     = {10.1038/s41586-023-06419-4}
}

@ARTICLE{MPGTVT,
  author={Yan, Huiwen and Liu, Mushuang},
  journal={IEEE Transactions on Vehicular Technology}, 
  title={Markov Potential Game and Multi-Agent Reinforcement Learning for Autonomous Driving}, 
  year={2026},
  volume={},
  number={},
  pages={1-14}}

@ARTICLE{gradientPlay,
  author={Zhang, Runyu and Ren, Zhaolin and Li, Na},
  journal={IEEE Transactions on Automatic Control}, 
  title={Gradient Play in Stochastic Games: Stationary Points, Convergence, and Sample Complexity}, 
  year={2024},
  volume={69},
  number={10},
  pages={6499-6514},
  doi={10.1109/TAC.2024.3387208}}

@book{fudenberg1991game,
  title={Game theory},
  author={Fudenberg, Drew},
  year={1991},
  publisher={MIT press}
}

@article{RL_communication,
  title={Learning and uncertainty-exploited directional antenna control for robust long-distance and broad-band aerial communication},
  author={Liu, Mushuang and Wan, Yan and Li, Songwei and Lewis, Frank L and Fu, Shengli},
  journal={IEEE Transactions on Vehicular Technology},
  volume={69},
  number={1},
  pages={593--606},
  year={2019},
  publisher={}
}

@misc{MAML,
      title={Model-Agnostic Meta-Learning for Fast Adaptation of Deep Networks}, 
      author={Chelsea Finn and Pieter Abbeel and Sergey Levine},
      year={2017},
      eprint={1703.03400},
      archivePrefix={arXiv},
      primaryClass={cs.LG},
      url={https://arxiv.org/abs/1703.03400}, 
}
\end{document}